%% file: samplepaper.tex
\documentclass[runningheads]{llncs}
\usepackage[T1]{fontenc}
\usepackage{graphicx}
\usepackage{xcolor}
\usepackage{amsmath}
\usepackage{amssymb}

\usepackage{enumitem}
\usepackage{tikz}
\usepackage{booktabs}

\begin{document}
\title{Circuit Complexity of Hierarchical \\ Knowledge Tracing and Implications for Log-Precision Transformers}
\titlerunning{Circuit Complexity of Hierarchical Knowledge Tracing}
\author{Naiming Liu\inst{1} \and
Richard Baraniuk\inst{1} \and 
Shashank Sonkar\inst{2}
}
\authorrunning{N. Liu et al.}
%
\institute{Rice University \and
University of Central Florida \\ 
\email{nl35@rice.edu, shashank.sonkar@ucf.edu}
}
\maketitle              

\begin{abstract}
Knowledge tracing models mastery over interconnected concepts, often organized by prerequisites. We analyze hierarchical prerequisite propagation through a circuit-complexity lens to clarify what is provable about transformer-style computation on deep concept hierarchies. Using recent results that log-precision transformers lie in logspace-uniform $\mathsf{TC}^0$, we formalize prerequisite-tree tasks including recursive-majority mastery propagation. Unconditionally, recursive-majority propagation lies in $\mathsf{NC}^1$ via $O(\log n)$-depth bounded-fanin circuits, while separating it from uniform $\mathsf{TC}^0$ would require major progress on open lower bounds. Under a monotonicity restriction, we obtain an unconditional barrier: alternating ALL/ANY prerequisite trees yield a strict depth hierarchy for \emph{monotone} threshold circuits. Empirically, transformer encoders trained on recursive-majority trees converge to permutation-invariant shortcuts; explicit structure alone does not prevent this, but auxiliary supervision on intermediate subtrees elicits structure-dependent computation and achieves near-perfect accuracy at depths 3--4. These findings motivate structure-aware objectives and iterative mechanisms for prerequisite-sensitive knowledge tracing on deep hierarchies.
\keywords{Knowledge Tracing \and Circuit Complexity \and Transformer Models}
\end{abstract}

\input{intro}

\input{related}

\input{proof}

\input{results}

\input{short_discussion_conclusion}

%

%
%
%
\bibliographystyle{splncs04}
\bibliography{custom}

\end{document}

%% file: intro.tex
\section{Introduction}

Knowledge tracing, modeling how learners master interconnected concepts over time, is fundamental to student modeling, intelligent tutoring systems, and adaptive learning \cite{corbett1994knowledge,sonkar2020qdkt,liu2022open}. A central challenge is capturing how mastery of prerequisite concepts propagates through a concept hierarchy to enable (or block) mastery of more advanced concepts. While a large body of work has explored neural architectures for knowledge tracing, the \emph{computational} capabilities and limitations of these models remain poorly characterized \cite{piech2015deep,zhang2017dynamic,sonkar2023deduction,ghosh2020context,worden2026foundationalassist}.

This paper takes a circuit-complexity perspective on hierarchical prerequisite reasoning. Recent results show that \emph{log-precision transformers}, transformers whose activations use $O(\log n)$ bits on inputs of length $n$, can be simulated by logspace-uniform constant-depth threshold circuits, i.e., they lie in logspace-uniform $\mathsf{TC}^0$ \cite{merrill2023parallelism,merrill2}. This connection suggests a natural route to transformer limitations: exhibit knowledge-tracing functions that lie beyond (uniform) $\mathsf{TC}^0$. However, proving such lower bounds for \emph{general} $\mathsf{TC}^0$ is notoriously difficult and is intertwined with major open questions in circuit complexity, for example separating $\mathsf{TC}^0$ from $\mathsf{NC}^1$ \cite{vollmer1999introduction}.

We formalize prerequisite propagation on deep concept hierarchies using balanced prerequisite trees and study what can be proved \emph{unconditionally} versus what remains open for standard (non-monotone) transformers. As a baseline, we define a natural hierarchical mastery rule in which each internal concept is mastered if a majority of its prerequisites are mastered; computing the root mastery then corresponds to evaluating a depth-$\Theta(\log n)$ majority formula, placing the task in $\mathsf{NC}^1$ \cite{vollmer1999introduction}. Showing that this function is \emph{not} in (uniform) $\mathsf{TC}^0$ would yield an immediate limitation for log-precision transformers, but would also constitute a significant circuit lower bound beyond current techniques.

To provide unconditional evidence that layered prerequisite structure can resist shallow parallelization, we also analyze an \emph{alternating} prerequisite model (ALL/ANY requirements), which corresponds to evaluation of alternating $\wedge/\vee$ trees. From an educational modeling perspective, prerequisite aggregation is naturally \emph{monotone} in prerequisite mastery: holding the curriculum fixed, mastering additional prerequisites should not reduce a learner's readiness for an advanced concept. This monotonicity is a common desideratum in mastery models because it aligns with the semantics of prerequisite structure and improves interpretability. We therefore analyze an alternating ALL/ANY prerequisite rule under \emph{monotone} threshold computation as a principled restricted setting in which unconditional depth lower bounds are known. Here, classical results establish a strict depth hierarchy for \emph{monotone} threshold circuits: increasing prerequisite depth provably increases the power required, and reducing depth forces superpolynomial (indeed exponential) size \cite{yao1989_circuits_local,hastad_goldmann1991}. While these monotone lower bounds do not directly apply to standard transformers (which are not monotone), they delineate a principled barrier and clarify which additional restrictions would be needed to obtain unconditional transformer impossibility results.

Our contributions provide a landscape view of prerequisite-tree knowledge tracing:

\begin{enumerate}
    \item \textbf{Formalization and complexity accounting.}
    We formalize prerequisite propagation on balanced concept trees under natural mastery rules, including recursive majority, and connect these tasks to standard circuit-evaluation problems.

    \item \textbf{What is provable today.}
    Unconditionally, for fixed arity $k$, recursive-majority prerequisite propagation is computable by logarithmic-depth bounded-fanin circuits, placing it in $\mathsf{NC}^1$; in contrast, proving separation from (uniform) $\mathsf{TC}^0$ would require major progress on circuit lower bounds.

    \item \textbf{Barriers and empirical diagnostics.}
    For an alternating ALL/ANY prerequisite rule, classical results yield a strict depth hierarchy for \emph{monotone} threshold circuits, showing that layered prerequisite structure can resist shallow parallelization under monotonicity constraints. Empirically, transformer encoders trained on recursive-majority trees learn permutation-invariant shortcuts under root-only supervision; explicit structure alone does not prevent this, but auxiliary supervision on intermediate subtrees elicits structure-dependent computation at moderate depths.
\end{enumerate}

Taken together, these results clarify both the promise and the limits of current theory. Hierarchical prerequisite reasoning naturally induces logarithmic-depth computation, and unconditional lower bounds for general $\mathsf{TC}^0$ remain out of reach. Our empirical results complement this view by showing that, even with explicit structure, models can learn shortcuts unless training rewards intermediate prerequisite propagation.

%% file: related.tex
\section{Related Work}

Our work connects three threads: (i) circuit-complexity characterizations of transformer-like architectures, (ii) modeling choices in knowledge tracing (especially prerequisite structure), and (iii) neural mechanisms for recursive / hierarchical computation.

\subsection{Theoretical Analysis of Transformer Computation}

A growing line of work characterizes transformers via circuit complexity. Merrill and Sabharwal~\cite{merrill2023parallelism} show that \emph{log-precision} transformers---those whose activations use $O(\log n)$ bits on length-$n$ inputs---can be simulated by logspace-uniform constant-depth threshold circuits, placing them within logspace-uniform $\mathsf{TC}^0$. Related analyses of restricted or saturated transformer variants connect attention-based computation to constant-depth threshold circuit families~\cite{merrill2}. These results motivate a principled route to expressivity limitations: if a target function can be shown to lie beyond (uniform) $\mathsf{TC}^0$, then it is unreachable by log-precision transformers in this single-pass setting.

Other work studies how specific architectural choices (e.g., attention constraints, normalization) affect expressivity on formal-language tasks~\cite{hahn2020limitations,chiang2022parity}. Strobl et al.~\cite{strobl2024survey} provide a useful synthesis, which harmonizes expressivity results across transformer variants and situates them in the standard circuit hierarchy $\mathsf{AC}^0 \subset \mathsf{TC}^0 \subseteq \mathsf{NC}^1 \subseteq \mathsf{L}$. Importantly, while many natural recursive computations (e.g., Boolean formula evaluation) lie in $\mathsf{NC}^1$, proving that such problems are \emph{not} computable by general $\mathsf{TC}^0$ circuits would imply major open lower bounds (e.g., separating $\mathsf{TC}^0$ from $\mathsf{NC}^1$). Our paper is explicitly motivated by this gap: we identify educationally meaningful hierarchical reasoning tasks that naturally land in $\mathsf{NC}^1$, and we clarify what is currently provable versus what remains open for general $\mathsf{TC}^0$-style transformer computation.

\subsection{Knowledge Tracing Methods and Prerequisite Structure}

Knowledge tracing (KT) began with Bayesian Knowledge Tracing (BKT)~\cite{corbett1994knowledge}, which models binary mastery via hidden states. Deep learning approaches shifted KT toward representation learning from interaction sequences. Deep Knowledge Tracing (DKT)~\cite{piech2015deep} applies RNNs to student-event sequences, while DKVMN~\cite{zhang2017dynamic} introduces key--value memory to separate concept representations from student mastery states.

Transformers have since become prominent in KT. SAKT~\cite{pandey2019sakt} uses self-attention to retrieve relevant past interactions; AKT~\cite{ghosh2020context} incorporates monotonicity and temporal effects. Subsequent work explores encoder--decoder variants~\cite{choi2020saint} and relation-aware attention~\cite{pandey2020rkt} illustrating that architectural choices materially affect performance.

A parallel line explicitly models concept structure and prerequisites. Graph-based KT methods such as GKT~\cite{nakagawa2019gkt} and GIKT~\cite{yang2020gikt} incorporate concept graphs via message passing or graph convolutions. Other approaches directly encode prerequisite influence or constraints, including SKT~\cite{tong2020skt} and prerequisite-driven extensions to sequence models~\cite{chen2018prerequisite}.

Despite substantial empirical progress, most KT work evaluates architectures experimentally rather than characterizing which \emph{structural} reasoning patterns (e.g., deep hierarchical prerequisite propagation) are compatible with their underlying computational model. Our work addresses this by formalizing prerequisite-tree propagation as a circuit-evaluation task and relating it to the transformer--$\mathsf{TC}^0$ literature.

\subsection{Recursive and Hierarchical Computation in Neural Networks}

The ability of neural networks to implement recursive computation has long been studied. Siegelmann and Sontag~\cite{siegelmann1995rnn} show that RNNs with unbounded precision can simulate general computation, though such results rely on assumptions that do not directly match fixed-precision modern implementations.

For structured domains, models with explicit inductive bias often match the computational structure of the task. Tree-LSTMs~\cite{tai2015treelstm} compute representations bottom-up along a given tree, aligning naturally with prerequisite-tree propagation. For general graphs, message-passing GNNs have inherent depth requirements for long-range dependencies: global aggregation typically demands depth proportional to the graph diameter, and practical depth can be limited by representation mixing effects~\cite{loukas2020gnn}. These observations parallel the KT setting, where prerequisite influence may need to traverse long concept chains.

Alternative architectures---state space models~\cite{gu2023mamba,merrill2024mamba} and memory-augmented networks~\cite{graves2014ntm,graves2016dnc}---offer different computational tradeoffs, though their expressivity under bounded precision is an active area of study.

Our contribution in this context is to connect an educationally motivated recursive reasoning task (hierarchical prerequisite propagation) to the circuit-complexity landscape: we give clean $\mathsf{NC}^1$ upper bounds for natural prerequisite-tree rules, identify unconditional barriers in restricted (monotone) threshold settings, and clarify which additional lower bounds would be required to turn these observations into unconditional limitations for general log-precision transformers.

%% file: proof.tex

\section{Background: Circuit Complexity and Transformers}

Circuit complexity studies computation via families of Boolean circuits and characterizes
problems by resources such as circuit size and depth \cite{arora2009computational,vollmer1999introduction}.
Circuit \emph{depth} is the length of the longest path from any input bit to the output.

The class $\mathsf{TC}^0$ consists of Boolean functions computable by \emph{uniform} families of
constant-depth, polynomial-size circuits with unbounded fan-in threshold (majority) gates
\cite{hajnal1993threshold}. Logspace-uniformity ensures the circuit for each input length can be
constructed by a logspace algorithm \cite{barrington1989uniform}.

Recent work shows that log-precision transformers, meaning transformers whose activations
use $O(\log n)$ bits on inputs of length $n$, \emph{can be simulated by} logspace-uniform
constant-depth threshold circuits. That is, they lie in logspace-uniform $\mathsf{TC}^0$
\cite{merrill2023parallelism,merrill2}.
Thus, any separation from logspace-uniform $\mathsf{TC}^0$ would immediately yield
a limitation for log-precision transformers.

\section{Prerequisite-Tree Knowledge Tracing: Upper Bound, Monotone Barrier, and an Open Question}

We formalize prerequisite propagation on deep concept hierarchies and clarify what is known
unconditionally versus what remains open for \emph{general} $\mathsf{TC}^0$ (and hence standard
log-precision transformers).

\subsection{Definitions}

\paragraph{Concept tree and inputs.}
We consider prerequisite \emph{trees}. Let $n$ denote the number of leaf concepts (input mastery bits),
and let $N$ denote the total number of concepts (nodes).

Fix an integer $k \ge 3$. Consider a perfectly balanced $k$-ary tree of depth $d$ (root at depth $0$),
where each internal node has exactly $k$ children. The number of leaves is $n = k^d$ (equivalently, $d = \log_k n$).
The total number of nodes is
\[
N = 1 + k + k^2 + \cdots + k^d = \frac{k^{d+1}-1}{k-1} = \frac{kn-1}{k-1} = O(n).
\]

\paragraph{Majority prerequisite rule.}
Leaves are labeled by input mastery bits in $\{0,1\}$.
Each internal node computes the (strict) majority of its $k$ children:
\[
\mathrm{MAJ}_k(x_1,\ldots,x_k)=1 \quad\text{iff}\quad \sum_{i=1}^k x_i \ge \left\lfloor \frac{k}{2}\right\rfloor + 1.
\]
(For odd $k$ this is the usual majority; for even $k$ this fixes a tie-breaking convention.)
Let $\mathrm{KT}_{\mathrm{MAJ}}$ denote the problem of computing the root value.

\subsection{Unconditional upper bound: $\mathrm{KT}_{\mathrm{MAJ}} \in \mathsf{NC}^1$}

\begin{theorem}[Upper Bound]
\label{thm:ktmaj_in_nc1}
For every fixed $k \ge 3$, $\mathrm{KT}_{\mathrm{MAJ}}$ on balanced $k$-ary trees with $n$ leaves
is computable in $\mathsf{NC}^1$.
\end{theorem}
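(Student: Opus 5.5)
The plan is to replace each internal majority gate by a constant-size, constant-depth circuit of bounded fan-in gates, and then compose these gadgets along the tree. Since $k$ is fixed, $\mathrm{MAJ}_k$ is a Boolean function on a constant number of inputs. It therefore has a fan-in-$2$ circuit over $\{\wedge,\vee,\neg\}$ whose size $s_k$ and depth $c_k$ depend only on $k$. Concretely, one can take the DNF that is the OR over all $\binom{k}{\lfloor k/2\rfloor+1}$ subsets $S$ of the AND of the $x_i$ with $i\in S$, with the wide ANDs and ORs realized as balanced binary trees. This even gives a monotone gadget of depth $c_k = O(k)$ and size $s_k \le k\,2^k$, with no negations needed.

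Next I substitute a copy of this gadget for every internal node of the balanced $k$-ary tree of depth $d=\log_k n$, wiring the outputs of the $k$ child gadgets (or the leaf inputs) into the inputs of the parent gadget. By induction on height, the gadget at each node outputs the mastery value of that node under the majority rule, so the root gadget computes $\mathrm{KT}_{\mathrm{MAJ}}$. The depth is $c_k\cdot d = c_k \log_k n = O(\log n)$. The size is at most $s_k$ times the number of internal nodes, which by the node count $N=O(n)$ computed above is $O(n)$, hence polynomial. One subtlety is that a gadget may reuse a child output many times. This is fine in a circuit, where fan-out is unrestricted. If one insists on formulas, duplicating subtrees blows the size up by a factor $s_k^{d} = n^{\log_k s_k}$, which is still polynomial because $k$ is constant.

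The remaining step, and the only one needing care, is uniformity, since $\mathsf{NC}^1$ is taken with the same logspace uniformity as in the Background section. I would index gates by pairs (tree node, gate index inside the gadget). Tree nodes are named by their base-$k$ path strings, or equivalently by heap-style indices, where the children of node $v$ are $kv+1,\ldots,kv+k$. These are computable with $O(\log n)$-bit arithmetic. The gadget itself is a fixed constant-size table. Hence a logspace machine can enumerate all gates and output their types and input wires, which gives logspace-uniformity. I do not expect a real obstacle here. The main thing to state carefully is that every constant depends only on $k$, so the depth bound is $O(\log n)$ uniformly in $n$ for each fixed $k$.
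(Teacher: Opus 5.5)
Your proposal is correct and follows the same route as the paper: replace each $\mathrm{MAJ}_k$ gate by a constant-size bounded-fan-in subcircuit and compose along the depth-$\log_k n$ tree, giving an $O(\log n)$-depth, polynomial-size circuit. Your explicit monotone DNF gadget and your logspace-uniformity argument via heap indexing fill in details that the paper leaves implicit.
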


\begin{proof}
The root value is the evaluation of a depth-$d=\Theta(\log n)$ formula whose internal gates are
constant-fanin majorities $\mathrm{MAJ}_k$. Because $k$ is a fixed constant, each $\mathrm{MAJ}_k$
can be implemented by a constant-size bounded-fanin Boolean subcircuit; substituting these yields a
bounded-fanin Boolean circuit of depth $O(d)=O(\log n)$ and polynomial size. Hence the problem lies in $\mathsf{NC}^1$
\cite{vollmer1999introduction}.
\end{proof}

\paragraph{Important caveat.}
Theorem~\ref{thm:ktmaj_in_nc1} does \emph{not} imply $\mathrm{KT}_{\mathrm{MAJ}} \notin \mathsf{TC}^0$.
Proving $\mathrm{KT}_{\mathrm{MAJ}} \notin$ (general) $\mathsf{TC}^0$ would be a major circuit lower bound and would
separate $\mathsf{TC}^0$ from $\mathsf{NC}^1$ \cite{vollmer1999introduction}.

\subsection{A monotone barrier (unconditional): alternating prerequisite trees resist shallow \emph{monotone} threshold circuits}

We next record an unconditional limitation for a \emph{restricted} model, namely
\emph{monotone} threshold circuits (no negated inputs and nonnegative weights in threshold gates).

\paragraph{Why monotonicity is a meaningful restriction.}
When mastery is represented as binary prerequisite indicators, a natural desideratum is \emph{monotonicity}:
mastering additional prerequisites should not decrease predicted readiness for a target concept.
This aligns with the semantics of prerequisite structure and supports interpretability for prerequisite-aware interventions.
Standard transformers are not monotone in general due to negative weights and non-monotone feature interactions.

We use this restriction as a normative baseline for prerequisite aggregation, rather than as a direct model of transformer computation.
Even in this restricted setting, layered prerequisite structure can resist shallow parallelization.

\paragraph{Alternating ALL/ANY prerequisite trees.}
Define $\mathrm{KT}_{\wedge/\vee}$ as prerequisite propagation on a tree whose internal nodes alternate between:
(i) ALL prerequisites required (AND), and (ii) ANY prerequisite sufficient (OR).
This is exactly evaluation of an alternating $\wedge/\vee$ tree on the leaf bits.
This task is monotone in the leaf mastery bits, so it aligns with the monotonicity desideratum above.

\begin{theorem}[Monotone threshold depth hierarchy via alternating prerequisite formulas]
\label{thm:monotone_threshold_depth_hierarchy}
For every integer $t \ge 2$, there exists an explicit monotone Boolean function $f_t$ that is
computed by a depth-$t$ alternating $\wedge/\vee$ \emph{read-once formula} of linear size (allowing
unbounded fan-in gates), but for which every depth-$(t-1)$ \emph{monotone} threshold circuit requires
size $\exp\!\bigl(n^{\Omega(1/t)}\bigr)$.
\end{theorem}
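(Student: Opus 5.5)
The plan is to reduce to the classical monotone lower-bound machinery rather than prove anything from scratch. For each $t\ge 2$ we take $f_t$ to be the Sipser-type alternating read-once formula: a depth-$t$ tree whose levels alternate $\wedge$ and $\vee$, whose fan-ins are chosen so that the formula has exactly $n$ leaves (each leaf a distinct variable, hence read-once and linear size), and whose fan-ins are balanced to about $n^{1/t}$. The bottom level is given slightly different fan-in (as in Sipser/H{\aa}stad) so that restrictions keep it non-degenerate. This is exactly $\mathrm{KT}_{\wedge/\vee}$ on an alternating prerequisite tree of depth $t$, so $f_t$ is explicit and monotone, and the upper-bound half of the statement holds by construction.

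For the lower bound we want to invoke the known depth hierarchy for monotone threshold circuits \cite{yao1989_circuits_local,hastad_goldmann1991}, so the proof is mainly a matter of matching parameters. The route I would try first goes through H{\aa}stad--Goldmann's argument. A depth-$(t-1)$ monotone threshold circuit of size $s$ yields, via Yao's local-computation method, a monotone protocol or a small-depth monotone $\wedge/\vee$ approximation. Concretely, we would convert each monotone threshold gate of polynomial-weight (or arbitrary-weight, after weight reduction, which preserves monotonicity) into a monotone object that depends on few ``critical'' coordinates under a random monotone restriction. We then apply a monotone switching lemma level by level: random restrictions that set variables only to values keeping $f_t$ alive (a $\wedge$-block's children fixed to $1$ or left free, a $\vee$-block's children fixed to $0$ or left free) collapse the bottom layer of threshold gates into short monotone DNFs/CNFs. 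Each such collapse removes one level of the circuit, while the same restriction reduces $f_t$ to (a copy of) $f_{t-1}$ on $n^{\Omega(1)}$ variables. Induction on $t$ then shows that a depth-$(t-1)$ circuit must be killed one level before the formula is. Unless $s\ge \exp(n^{\Omega(1/t)})$, this is a contradiction, where the exponent comes from the bottom fan-in $n^{\Theta(1/t)}$ surviving the restrictions.

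The main obstacle is the switching step for \emph{threshold} gates rather than $\wedge/\vee$ gates. Monotone threshold gates with large fan-in do not simplify under random restrictions the way $\mathrm{AND}$s do. H{\aa}stad--Goldmann handle this through the multiparty communication-complexity framework instead: a depth-$(t-1)$ monotone threshold circuit of size $s$ gives a monotone $(t-1)$-round or low-cost protocol, and alternating depth-$t$ read-once formulas require communication $n^{\Omega(1/t)}$ in that model. In the write-up I would therefore cite their theorem as the black-box lower bound. I would verify only three things: that their hard function can be taken to be the read-once alternating formula above (or is a projection of it, with monotone projections preserving both monotonicity and circuit size up to constants); that their bound is stated for circuits of depth one less than the formula; and that the size bound $\exp(n^{\Omega(1/t)})$ follows after substituting $n$ for the number of leaves.
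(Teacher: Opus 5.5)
Your proposal matches the paper: both treat the theorem as the known Yao / H{\aa}stad--Goldmann depth hierarchy for monotone threshold circuits and justify it purely by citation. Your three checks on the citation are exactly what it needs, namely that $f_t$ is the read-once alternating formula, that the comparison is depth $t$ versus $t-1$, and that the bound $\exp(n^{\Omega(1/t)})$ is stated in terms of the number of leaves.

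The exploratory sketch in your middle paragraph carries no weight and should be left out of a write-up. In particular, arbitrary-weight monotone threshold gates cannot in general be reduced to polynomial weights at the same depth: that reduction costs an extra layer, which would break the depth-$(t-1)$ accounting.
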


\begin{proof}[Justification and references]
Yao introduced monotone threshold circuits and proved a strict depth hierarchy by exhibiting
explicit monotone functions computable at larger depth that require exponential size when the depth is reduced
\cite{yao1989_circuits_local}.
H{\aa}stad and Goldmann strengthened and streamlined these results using an explicit family $f_t$ defined by
a depth-$t$ alternating $\wedge/\vee$ read-once formula of linear size (with unbounded fan-in), and proving
lower bounds of the form $\exp(n^{\Omega(1/t)})$ for depth-$(t-1)$ monotone threshold circuits
\cite{hastad_goldmann1991}.
\end{proof}

\paragraph{How to read Theorem~\ref{thm:monotone_threshold_depth_hierarchy}.}
This theorem states that \emph{no fixed constant depth} of monotone threshold circuits can, in general,
compress away the layered prerequisite structure captured by alternating ALL/ANY trees.
(It does \emph{not} claim that the specific balanced $k$-ary tree from our $\mathrm{KT}_{\mathrm{MAJ}}$ definition is hard in this model.)

\paragraph{Reduction from $\wedge/\vee$ trees to majority trees (restriction).}
We now connect this monotone barrier to majority-tree prerequisite propagation.

\begin{lemma}[AND/OR as restricted ternary majority]
\label{lem:andor_as_maj3}
For Boolean inputs $a,b\in\{0,1\}$,
\[
\mathrm{MAJ}_3(a,b,0)= a \wedge b,
\qquad
\mathrm{MAJ}_3(a,b,1)= a \vee b.
\]
\end{lemma}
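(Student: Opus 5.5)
The plan is to verify both identities directly from the definition of $\mathrm{MAJ}_k$ given above. For $k=3$ we have $\lfloor 3/2\rfloor+1 = 2$, so
\[
\mathrm{MAJ}_3(x_1,x_2,x_3)=1 \iff x_1+x_2+x_3 \ge 2 .
\]
Once this threshold is written out, each identity reduces to a statement about the integer sum $a+b$ with $a,b\in\{0,1\}$. No earlier result is needed beyond this definition.

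For the first identity, fixing the third input to $0$ gives $\mathrm{MAJ}_3(a,b,0)=1$ iff $a+b\ge 2$. Since $a+b\le 2$, this holds iff $a=b=1$, which is exactly $a\wedge b$.

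For the second identity, fixing the third input to $1$ gives $\mathrm{MAJ}_3(a,b,1)=1$ iff $a+b+1\ge 2$, i.e.\ iff $a+b\ge 1$. This holds iff at least one of $a,b$ equals $1$, which is exactly $a\vee b$.

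As a sanity check, I would also present the four-row truth table over $(a,b)$ confirming both columns. There is no real obstacle here. The only point needing care is using the paper's stated threshold convention ($\ge\lfloor k/2\rfloor+1$) so that the argument matches the definition exactly. Since $k=3$ is odd, that convention coincides with ordinary majority and no tie-breaking issue arises.
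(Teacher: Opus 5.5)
Your proof is correct: with the threshold $\lfloor 3/2\rfloor+1=2$, fixing the third input to $0$ or $1$ reduces the condition to $a+b\ge 2$ or $a+b\ge 1$, which give $a\wedge b$ and $a\vee b$ respectively. The paper states the lemma without a written proof and treats it as immediate from the definition of $\mathrm{MAJ}_3$, so your direct verification is the intended argument.
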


\begin{corollary}[Monotone lower bound transfers to majority-tree KT]
\label{cor:maj_hard_monotone}
Fix any $t \ge 2$ and let $f_t$ be as in Theorem~\ref{thm:monotone_threshold_depth_hierarchy}.
Replacing each $\wedge/\vee$ gate in the defining alternating tree for $f_t$ using Lemma~\ref{lem:andor_as_maj3}
(and adding constant leaves) yields a ternary-majority tree function $g_t$ such that any depth-$(t-1)$
polynomial-size monotone threshold circuit for $g_t$ would imply one for $f_t$.
In particular, $g_t$ also requires superpolynomial size at depth $(t-1)$ in the monotone threshold model.
\end{corollary}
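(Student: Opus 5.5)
The plan is a direct restriction argument. First I make the construction of $g_t$ precise. Then I check that it computes $f_t$ once the constant leaves are fixed. Finally I check that monotone threshold circuits of a given depth and size survive fixing inputs to constants.

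\textbf{Step 1 (binarize).} The formula for $f_t$ from Theorem~\ref{thm:monotone_threshold_depth_hierarchy} is read-once and has linear size, but its gates have unbounded fan-in. I replace each $\wedge$ (resp.\ $\vee$) gate of fan-in $m$ by a balanced binary tree of $m-1$ binary $\wedge$ (resp.\ $\vee$) gates. The result is still a read-once monotone formula. It computes the same function $f_t$ on the same $n$ variables and has $O(n)$ binary gates. Its depth grows by a logarithmic factor, but the lower bound concerns the depth of the \emph{circuit} for $g_t$, not the depth of the tree, so this is harmless.

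\textbf{Step 2 (substitute majorities).} By Lemma~\ref{lem:andor_as_maj3}, each binary gate $a\wedge b$ becomes $\mathrm{MAJ}_3(a,b,0)$ and each $a\vee b$ becomes $\mathrm{MAJ}_3(a,b,1)$. Each such gate gets a fresh leaf for the third argument. This gives a ternary-majority tree with $n$ variable leaves and $O(n)$ constant leaves, so $n' = O(n)$ leaves in total. I define $g_t(x,c)$ as the root value of this tree, where $c$ ranges over the constant-leaf positions. Let $c^\ast$ be the intended constants. By induction on the tree, the lemma gives $g_t(x,c^\ast)=f_t(x)$ for all $x$.

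\textbf{Step 3 (restriction preserves the model).} Let $C$ be a depth-$(t-1)$ monotone threshold circuit of size $s$ computing $g_t$. I hardwire $c=c^\ast$. Consider a threshold gate $[\sum_i w_i y_i \ge \theta]$ with $w_i\ge 0$.
\begin{itemize}
\item Setting $y_j=0$ simply deletes that term.
\item Setting $y_j=1$ replaces $\theta$ by $\theta-w_j$.
\end{itemize}
In both cases the weights stay nonnegative and no negations are introduced, so the gate remains a monotone threshold gate. Neither depth nor size increases. Hence $C$ yields a depth-$(t-1)$ monotone threshold circuit of size at most $s$ for $f_t$.

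If one instead reads $g_t$ as already having its constants built in, so that it is a function of $x$ alone, then $g_t=f_t$ as functions and there is nothing to do.

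\textbf{Conclusion.} By Theorem~\ref{thm:monotone_threshold_depth_hierarchy}, $s \ge \exp(n^{\Omega(1/t)})$. Since $n' = O(n)$, this is $\exp\!\bigl((n')^{\Omega(1/t)}\bigr)$, which is superpolynomial in the input length of $g_t$. This proves the corollary.

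The main obstacle is bookkeeping, not a real difficulty. One must handle the unbounded fan-in of $f_t$, since Lemma~\ref{lem:andor_as_maj3} only treats binary gates. One must also confirm that the blow-up in the number of leaves is only linear, so that the exponential lower bound is still stated in terms of the true input length of $g_t$.
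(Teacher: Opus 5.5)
Your proof is correct and uses the same restriction argument as the paper. You build $g_t$ by substituting $\mathrm{MAJ}_3(\cdot,\cdot,0/1)$ via Lemma~\ref{lem:andor_as_maj3}, fix the added constant leaves to recover $f_t$, and invoke closure of monotone threshold circuits under restriction without any increase in depth or size. Your extra bookkeeping fills in details the paper leaves implicit: binarizing the unbounded fan-in gates of $f_t$ so the binary lemma applies, checking that hardwiring an input keeps weights nonnegative, and noting that the leaf count stays $O(n)$, so the bound remains $\exp\bigl((n')^{\Omega(1/t)}\bigr)$ in $g_t$'s input length.
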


\begin{proof}
Replacing each $\wedge/\vee$ internal node by $\mathrm{MAJ}_3(\cdot,\cdot,0/1)$ yields a ternary-majority tree whose
function restricts (by fixing the added constant leaves) to the original $\wedge/\vee$ tree function.
Monotone threshold circuits are closed under restrictions, so any depth-$(t-1)$ monotone threshold circuit for $g_t$
would give one for $f_t$, contradicting Theorem~\ref{thm:monotone_threshold_depth_hierarchy}.
\end{proof}

\paragraph{On the monotone vs.\ general $\mathsf{TC}^0$ gap.}
Theorems~\ref{thm:monotone_threshold_depth_hierarchy}--\ref{cor:maj_hard_monotone} are unconditional but apply only to
\emph{monotone} threshold circuits. In contrast, these (and related) tree-evaluation families can be far easier for
\emph{general} threshold circuits, and obtaining comparable lower bounds for general $\mathsf{TC}^0$ remains open.
For example, H{\aa}stad--Goldmann note that their monotone-hard functions admit much shallower \emph{general} threshold circuits
via known simulations \cite{hastad_goldmann1991,allender1989_threshold}.

\subsection{Implications for log-precision transformers (conditional)}

\begin{corollary}[Conditional transformer implication]
\label{cor:transformer_conditional}
If $\mathrm{KT}_{\mathrm{MAJ}}$ (recursive majority on balanced trees) is not computable by logspace-uniform
(general) $\mathsf{TC}^0$ circuits, then log-precision transformers cannot compute $\mathrm{KT}_{\mathrm{MAJ}}$.
\end{corollary}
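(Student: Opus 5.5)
The argument is a contrapositive that combines the Merrill--Sabharwal simulation with the hypothesis. Suppose, for contradiction, that some log-precision transformer family $\{T_n\}$ computes $\mathrm{KT}_{\mathrm{MAJ}}$. By this we mean that on every input encoding the $n=k^d$ leaf mastery bits, possibly with a fixed, input-independent serialization of the tree structure, $T_n$ outputs the root value. We invoke the result recalled in the background section: log-precision transformers are simulable by logspace-uniform constant-depth, polynomial-size threshold circuits \cite{merrill2023parallelism,merrill2}. This yields a logspace-uniform $\mathsf{TC}^0$ family $\{C_n\}$ with $C_n(x)=T_n(x)$ for all inputs $x$ of the relevant length. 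Hence $\mathrm{KT}_{\mathrm{MAJ}}$ lies in logspace-uniform $\mathsf{TC}^0$, which contradicts the hypothesis.

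The only step needing care is aligning the input conventions of the two statements. The simulation theorem is stated for transformers reading a token string of length $m$, and our problem is indexed by the number of leaves $n$.

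\textbf{Input length.} Since $k$ is fixed, the balanced tree on $n$ leaves has $N=O(n)$ nodes. Any reasonable serialization, whether just the leaf bits or the leaf bits plus node or structure tokens, therefore has length $m=\Theta(n)$.

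\textbf{Encoding map.} This map is a projection, or at worst a logspace-computable fixed wiring. Composing it with the circuits $C_m$ preserves constant depth, polynomial size, and logspace uniformity.

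\textbf{Admissible lengths.} Inputs exist only at lengths $n=k^d$. We handle this by treating other lengths as don't-cares, or by letting the circuit family output $0$ there. The uniformity machine can check whether $n$ is a power of $k$ in logspace, so this choice is also harmless.

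I expect this bookkeeping to be the main obstacle, if there is one. Specifically, one must confirm that "log-precision transformers compute $\mathrm{KT}_{\mathrm{MAJ}}$" is read in the same sense as the simulation theorem:
\begin{itemize}
\item a single forward pass;
\item a fixed number of layers independent of $n$;
\item $O(\log n)$-bit precision;
\item output read off a designated position.
\end{itemize}
The plan is to fix these conventions explicitly at the start of the proof to match \cite{merrill2023parallelism}. After that, the corollary follows immediately from the inclusion together with the closure of logspace-uniform $\mathsf{TC}^0$ under the encoding reductions just described.
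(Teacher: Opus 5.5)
Your proposal is correct and follows the same route as the paper: its proof simply cites the inclusion of log-precision transformers in logspace-uniform $\mathsf{TC}^0$ and concludes by contraposition, and your bookkeeping on input length, encodings, and admissible lengths $n=k^d$ is extra care the paper leaves implicit. One small addition to your list of conventions: the transformer's parameters should be fixed independently of $n$, with only the precision growing as $O(\log n)$, because an arbitrary per-length family $\{T_n\}$ is not covered by the uniform simulation.
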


\begin{proof}
Log-precision transformers lie in logspace-uniform $\mathsf{TC}^0$ \cite{merrill2023parallelism,merrill2}.
The claim follows immediately.
\end{proof}

\paragraph{Takeaway.}
Unconditionally, $\mathrm{KT}_{\mathrm{MAJ}} \in \mathsf{NC}^1$, and alternating prerequisite-tree propagation exhibits strong
limitations for \emph{monotone} threshold circuits in the form of a strict depth hierarchy.
Any unconditional limitation for standard (log-precision) transformers would require progress on open lower bounds for
\emph{general} $\mathsf{TC}^0$.

%% file: results.tex
\section{Empirical Analysis}

We complement the theoretical analysis with controlled experiments probing whether standard transformer encoders \emph{learn} hierarchical prerequisite composition on recursive majority trees.

\subsection{Experimental Setup}

\paragraph{Task.}
We instantiate the $\mathrm{KT}_{\mathrm{MAJ}}$ task from Section~3. Given $n = 3^d$ binary leaf mastery bits, the label is the root value of a balanced ternary tree in which each internal node outputs $\mathrm{MAJ}_3$ of its three children. Labels are computed deterministically by bottom-up evaluation.

\paragraph{Design rationale: in-distribution evaluation.}
To avoid conflating hierarchical computation with length extrapolation, we evaluate models \emph{in distribution} at a fixed depth. Concretely, for each depth $d \in \{3,4,5,6\}$ (corresponding to $n \in \{27,81,243,729\}$), we train a fresh model and evaluate on held-out examples drawn from the same distribution. Thus, any performance gap reflects the solution found by training under fixed model class and optimization, rather than out-of-distribution generalization.

\paragraph{Models.}
We compare:
\begin{enumerate}
    \item \textbf{Transformer encoder.} A standard encoder with 4 layers, hidden size 128, 4 attention heads, feedforward size 512, and sinusoidal positional encodings. Inputs are the leaf bits tokenized as \texttt{0}/\texttt{1} with a prepended \texttt{[CLS]} token; the final \texttt{[CLS]} representation is fed to a linear classifier (approximately 794K parameters).

    \item \textbf{MLP baseline (global sum only).} A 3-layer MLP whose sole input is the normalized leaf sum $(\sum_i x_i)/n$. This baseline cannot represent any position- or structure-dependent function; it captures what can be achieved using only permutation-invariant aggregate statistics.
\end{enumerate}
We also report an \textbf{oracle} baseline that computes the exact recursive majority, achieving 100\% accuracy by construction.

\paragraph{Training.}
For each depth, we generate 20{,}000 training examples, 5{,}000 validation examples, and 5{,}000 test examples, with leaves sampled i.i.d.\ from $\mathrm{Bernoulli}(0.5)$. Models are trained with AdamW (learning rate $3\times10^{-4}$, weight decay 0.01) and early stopping on validation accuracy (patience 15 epochs). All experiments use a single NVIDIA GPU.

\paragraph{Permutation diagnostic.}
To probe whether a trained model uses positional information, we evaluate it on \emph{permuted} test inputs: for each test example we uniformly shuffle leaf positions while keeping the original label (computed on the unpermuted tree). A model whose accuracy is unchanged under this perturbation behaves as a permutation-invariant predictor on this task instance. We also report a \textbf{permuted-oracle} score: the accuracy of the true tree evaluator applied to permuted leaves, compared to the original label, which quantifies how much shuffling disrupts the task-relevant structure.

\subsection{Results}

\paragraph{Transformer performance tracks the sum-only baseline.}
Table~\ref{tab:main_results} shows test accuracy across depths. The transformer matches the sum-only MLP to within noise at every depth, while the oracle achieves 100\% accuracy. Both learned models achieve 75--80\% accuracy at shallow depths and degrade smoothly to $\approx 68\%$ at depth 6.

\begin{table}[t]
\centering
\caption{Test accuracy (\%) on recursive majority trees. The transformer closely matches a baseline that only observes the global leaf sum, while the oracle achieves 100\% at all depths.}
\label{tab:main_results}
\begin{tabular}{ccccc}
\toprule
\textbf{Depth} & \textbf{Leaves} & \textbf{Transformer} & \textbf{MLP (sum)} & \textbf{Oracle} \\
\midrule
3 & 27  & 79.7 & 80.0 & 100.0 \\
4 & 81  & 75.8 & 75.7 & 100.0 \\
5 & 243 & 70.7 & 70.6 & 100.0 \\
6 & 729 & 68.4 & 68.3 & 100.0 \\
\bottomrule
\end{tabular}
\end{table}

The close agreement with the sum-only baseline suggests that, under this training setup, the transformer learns a predictor dominated by permutation-invariant aggregate information (e.g., the leaf sum) rather than an exact tree-structured computation. Intuitively, the leaf sum is correlated with the root label under $\mathrm{Bernoulli}(0.5)$ leaves, but this correlation weakens with depth, consistent with the observed degradation.

\paragraph{Permutation invariance supports a permutation-invariant solution.}
Table~\ref{tab:permutation} reports the permutation diagnostic. Transformer accuracy is essentially unchanged when leaf positions are randomly permuted (changes within $\pm 0.3\%$), mirroring the behavior of the sum-only MLP. In contrast, the permuted-oracle score drops with depth (from 65.8\% at depth 4 to 56.4\% at depth 6), indicating that shuffling materially disrupts the task-relevant hierarchical structure.

\begin{table}[t]
\centering
\caption{Permutation diagnostic. Transformer accuracy is unchanged under random leaf permutation, consistent with reliance on permutation-invariant aggregate features. The permuted-oracle score shows that permutation disrupts underlying tree structure.}
\label{tab:permutation}
\resizebox{0.85\linewidth}{!}{
\begin{tabular}{ccccc}
\toprule
\textbf{Depth} & \textbf{Trans.\ (orig)} & \textbf{Trans.\ (perm)} & \textbf{MLP (sum)} & \textbf{Perm.\ Oracle} \\
\midrule
4 & 76.0 & 76.0 & 76.0 & 65.8 \\
5 & 70.0 & 70.2 & 70.5 & 63.0 \\
6 & 68.4 & 68.2 & 67.7 & 56.4 \\
\bottomrule
\end{tabular}
}
\end{table}

Together, Tables~\ref{tab:main_results}--\ref{tab:permutation} indicate that, in this controlled setting, the trained transformer behaves similarly to a permutation-invariant predictor: it does not appear to exploit positional encodings to recover the fixed ternary grouping required for exact recursive evaluation.

\paragraph{Interpretation and limitations.}
These experiments do \emph{not} establish a representational impossibility for transformers. In particular, our theory section highlights that $\mathrm{KT}_{\mathrm{MAJ}} \in \mathsf{NC}^1$ and that separations from (uniform) $\mathsf{TC}^0$ remain open. Rather, the experiments provide evidence about \emph{learned solutions}: with standard training and a fixed-capacity encoder, gradient-based optimization converges to a shallow, permutation-invariant shortcut that attains moderate accuracy without implementing the hierarchical computation needed for perfect performance.

This behavior is consistent with the broader perspective of the paper: hierarchical prerequisite propagation induces depth-$\Theta(\log n)$ composition, and absent architectural bias toward recursive structure, models may default to correlations that are easier to capture from data but insufficient for exact prerequisite-tree evaluation.

\subsection{Can Structural Scaffolding Help?}

\paragraph{Scaffolding as an intervention on learned solutions.}
The preceding experiments show that, under standard root-only supervision, transformer encoders converge to a permutation-invariant shortcut. This observation alone does not distinguish representational limitations from learning dynamics. We therefore test whether (i) making the hierarchy explicit in the input and (ii) adding intermediate supervision can steer the same architecture toward a structure-dependent solution.

\paragraph{Level-tagged structural encoding.}
We modify the input representation to expose the ternary grouping structure. Instead of a flat sequence $x_1 x_2 \ldots x_n$, we insert level-tagged separator tokens $]_k$ that mark the end of each \emph{height-$k$} subtree (i.e., $k$ aggregation steps above the leaves) in a fixed left-to-right traversal. For a depth-$d$ ternary tree, the resulting sequence interleaves leaf bits and separators, ending with a final $]_d$ token corresponding to the root subtree. This encoding makes subtree boundaries explicit in token identity, rather than requiring the model to infer them from positional encodings alone.

\paragraph{Auxiliary supervision on intermediate subtrees.}
Structure alone may be insufficient if the training objective provides no incentive to represent intermediate values. We therefore add an auxiliary loss at separator positions: at each $]_k$, the model predicts the majority value of the subtree that just closed. This encourages learning intermediate computations aligned with the hierarchical structure. We train with $\mathcal{L} = \mathcal{L}_{\mathrm{root}} + \lambda \mathcal{L}_{\mathrm{aux}}$ with $\lambda=1.0$.\footnote{Aux Acc is computed over all separator positions (including the root-level separator $]_d$) using a shared auxiliary classifier head; root accuracy is computed from a separate classifier head applied to the \texttt{[CLS]} token for consistency with the flat transformer baseline.}

\paragraph{Scaffold results.}
Table~\ref{tab:scaffold_main} summarizes the scaffold experiments. Two findings stand out. First, adding structure without auxiliary supervision (Struct Only) does not improve over the flat baseline, indicating that explicitly marking subtree boundaries is not enough to prevent shortcut learning under a root-only objective. Second, combining structure with auxiliary supervision (Struct+Aux) yields near-perfect accuracy at depths 3--4 (99.96\% and 99.36\%), along with high separator-prediction accuracy (Aux Acc $\ge 99.8\%$). At depth~5, Struct+Aux improves root accuracy from 70.8\% to 77.2\% and achieves high average separator accuracy (98.7\%). We emphasize that Aux Acc aggregates over \emph{all} separators (many low-level subtrees and a single root-level separator), so it is dominated by lower-level predictions and should not be interpreted as the model predicting the root separator with 98.7\% accuracy.

\begin{table}[t]
\centering
\caption{Scaffold experiment results. Structure alone (Struct Only) does not improve over the flat baseline. Adding auxiliary supervision (Struct+Aux) yields large gains at shallow depths and substantially increases separator-prediction accuracy.}
\label{tab:scaffold_main}
\begin{tabular}{cccccc}
\toprule
\textbf{Depth} & \textbf{Leaves} & \textbf{Flat} & \textbf{Struct Only} & \textbf{Struct+Aux} & \textbf{Aux Acc} \\
\midrule
3 & 27  & 80.0 & 80.0 & 99.96 & 99.9 \\
4 & 81  & 75.7 & 75.7 & 99.36 & 99.8 \\
5 & 243 & 70.8 & 70.8 & 77.2  & 98.7 \\
6 & 729 & 68.3 & 68.5 & 68.7  & 53.0 \\
\bottomrule
\end{tabular}
\end{table}

\paragraph{Permutation sensitivity indicates structure-dependent behavior.}
Table~\ref{tab:scaffold_perm} reports the permutation diagnostic for the scaffolded models. For Struct+Aux at depths 3--5, permuting leaf values while keeping separators fixed causes substantial accuracy drops (up to 32.7 points at depth~4), in contrast to the flat baseline whose accuracy is unchanged under permutation. This indicates that, with auxiliary supervision, the learned solution depends on the alignment between leaf positions and subtree boundaries, rather than a purely permutation-invariant aggregate statistic.

\begin{table}[t]
\centering
\caption{Permutation diagnostic for scaffold models. Accuracy drop under leaf permutation indicates structure-dependent computation.}
\label{tab:scaffold_perm}
\resizebox{0.9\linewidth}{!}{
\begin{tabular}{ccccc}
\toprule
\textbf{Depth} & \textbf{Flat (perm drop)} & \textbf{Struct+Aux (orig)} & \textbf{Struct+Aux (perm)} & \textbf{Drop} \\
\midrule
3 & $+$0.0 & 99.96 & 72.4 & $+$27.6 \\
4 & $+$0.0 & 99.36 & 66.7 & $+$32.7 \\
5 & $+$0.0 & 77.2  & 65.6 & $+$11.6 \\
6 & $-$0.1 & 68.7  & 68.2 & $+$0.5 \\
\bottomrule
\end{tabular}
}
\end{table}

\paragraph{An empirical limit at depth 6 under this configuration.}
At depth~6 (729 leaves; 364 separator positions), Struct+Aux does not improve root accuracy over the baseline, and separator-prediction accuracy drops to 53\%---barely above chance---with no permutation sensitivity. Under our training budget and architecture (4-layer, $d_{\mathrm{model}}{=}128$), we therefore do not observe successful learning of the hierarchical computation at this depth. Determining whether this reflects optimization difficulty, insufficient capacity, or the need for different structural inductive bias is an important direction for future work.

%% file: short_discussion_conclusion.tex
\section{Discussion and Conclusion}

Prerequisite-based knowledge tracing fundamentally requires an \emph{aggregation} operation: given mastery signals over prerequisite concepts, infer readiness for a target concept. Real curricula rarely match the extremes of requiring \emph{all} prerequisites (too strict) or \emph{any} prerequisite (too lenient); instead they behave like a threshold rule—students need \emph{enough} prerequisite mastery to progress. Modeling this as recursive threshold aggregation over a concept hierarchy yields a principled abstraction of prerequisite propagation. Our complexity analysis clarifies that this propagation is inherently layered: for fixed arity, evaluating recursive-majority prerequisite trees is a logarithmic-depth computation (in $\mathsf{NC}^1$), and collapsing deep prerequisite structure into a constant-depth computation would require resolving major open circuit lower bounds. Thus, rather than claiming an impossibility for transformers, we use complexity as a lens to explain why deep prerequisite propagation is a nontrivial capability that may not be recovered from end-task supervision alone.

Empirically, we find that standard transformer encoders trained with \emph{root-only} supervision do not learn prerequisite propagation even on a fully observed hierarchy. Across depths, they converge to permutation-invariant shortcuts that track a sum-only baseline and remain stable under structure-destroying perturbations, leaving a large gap to the exact prerequisite algorithm. This matters for KT: a model can appear accurate while effectively behaving as an \emph{aggregate mastery} predictor that ignores which prerequisites are mastered, undermining interpretability and prerequisite-aware interventions. Crucially, shortcut learning is not inevitable. When we make hierarchy boundaries explicit and add \emph{intermediate concept-level supervision} aligned with subtree mastery, the same architecture switches to structure-dependent behavior and achieves near-perfect propagation at moderate depths, while revealing a regime (deeper hierarchies) where learning breaks under fixed capacity and training budget.

For AIED, the practical takeaway is that prerequisite structure should be treated as a first-class object in KT evaluation and training. Accuracy on held-out responses can be consistent with a model that ignores prerequisite dependencies; therefore, KT benchmarks should include \emph{counterfactual/structure-sensitivity diagnostics} alongside standard metrics. On the modeling side, our results suggest two actionable design principles: (i) incorporate mechanisms that explicitly represent and update intermediate prerequisite mastery (multi-task supervision, latent concept heads, or structured propagation modules), and (ii) support depth-adaptive computation for curricula with long prerequisite chains (e.g., iterative inference, recurrent/looped computation, or hybrid KT models that delegate propagation to a structured component). More broadly, our prerequisite-propagation probe provides a controlled way to stress-test whether KT models truly use hierarchical prerequisite knowledge, and to measure when and how supervision or inductive bias can elicit the intended reasoning.